\newcommand{\bfx}{\mathbf{x}}
\newcommand{\bfw}{\mathbf{w}}
\newcommand{\bff}{\mathbf{f}}
\newcommand{\Real}[0]{\mathbb{R}}
\theoremstyle{remark}
\providecommand{\theoremname}{Theorem}
\providecommand{\propositionname}{Proposition}
\newtheorem{thm}{\protect\theoremname}
  \theoremstyle{plain}
  \newtheorem{prop}[thm]{\protect\propositionname}
\title{One-vs-Each Approximation to Softmax for Scalable Estimation of Probabilities}
\author{
Michalis K. Titsias\\
Department of Informatics\\
Athens University of Economics and Business\\
\texttt{mtitsias@aueb.gr} \\
}
\begin{document}


\maketitle

\begin{abstract} 
The softmax representation of probabilities for categorical variables 
plays a prominent role in modern machine 
learning with numerous applications in areas such as large scale classification, neural language modeling 
and recommendation systems. However, softmax estimation is very expensive for large scale inference
because of the high cost associated with computing the normalizing constant. 
Here, we introduce an efficient approximation to softmax probabilities 
which takes the form of a rigorous lower bound on the exact probability. This 
bound is expressed as a product over pairwise probabilities 
and it leads to scalable estimation based on stochastic optimization. 
It allows us to perform doubly stochastic estimation 
by subsampling both training instances and class labels. We show that the new 
bound has interesting theoretical properties and we demonstrate its use 
in classification problems. 
\end{abstract}

\section{Introduction}


Based on the softmax representation,  
the probability of a variable $y$ to take the value
$k \in \{1,\ldots,K\}$, where $K$ is the number of categorical symbols or classes,  
is modeled by 
\begin{equation}
p(y=k|\bfx) = \frac{e^{f_k(\bfx; \bfw)}}
{\sum_{m=1}^K e^{f_m(\bfx; \bfw)  }},
\label{eq:softmaxGen}
\end{equation}
where each $f_k(\bfx; \bfw)$ is often referred to as {\em the score function} and it is a real-valued function 
indexed by an input vector $\bfx$ and parameterized by $\bfw$. The score function 
measures the compatibility of input $\bfx$ with symbol $y=k$ so that the higher the score is
the more compatible $\bfx$ becomes with $y=k$. The most common application of softmax is multiclass 
classification where $\bfx$ is an observed input vector and 
$f_k(\bfx; \bfw)$ is often chosen to be a linear function or  more generally a non-linear function such as a neural network \citep{Bishop:2006, Goodfellow-et-al-2016-Book}. 
Several other applications of softmax arise, for instance, in neural language modeling for learning 
word vector embeddings \citep{MnihTeh2012, mikolov2013, pennington-etal-2014}  
and also 
 in collaborating filtering for representing 
probabilities of $(user,item)$ pairs \citep{PaquetKoenigsteinWinther14}. 
In such applications the number of symbols $K$ could often be very large, e.g.\ 
of the order of tens of thousands or millions, which makes the computation of softmax 
probabilities very expensive due to the large sum in the normalizing constant of Eq.\ \eqref{eq:softmaxGen}. 
Thus, exact training procedures based on maximum likelihood 
or Bayesian approaches are computationally prohibitive and approximations are needed. 
%
%
While some  rigorous bound-based approximations to the softmax exists \citep{bouchard_efficient_2007}, they are not so accurate or scalable and therefore it would be highly desirable to develop accurate and computationally efficient 
approximations.

In this paper we introduce a new efficient approximation to softmax probabilities 
which takes the form of a lower bound on the probability of Eq. \eqref{eq:softmaxGen}. 
This bound draws an interesting connection between the exact softmax probability and all its one-vs-each 
pairwise probabilities, and it has several desirable properties. Firstly, for the non-parametric estimation case it leads to an approximation of the likelihood 
that shares the same global optimum with exact maximum likelihood, and thus estimation based on the approximation  
is a perfect surrogate for the initial estimation problem. Secondly, the bound allows for scalable learning 
through stochastic optimization where data subsampling can be combined with 
subsampling categorical symbols. 
Thirdly, whenever the initial exact softmax cost function is convex 
the bound remains also convex. 



Regarding related work, there exist several other methods that try 
to deal with the high cost of softmax such as methods  
that attempt to perform the exact computations \citep{gopal13, VijayanarasimhanSMY14}, methods that change the 
model based on hierarchical or stick-breaking constructions \citep{morin2005hierarchical, KhanMMM12} and 
sampling-based methods \citep{BengioSenecal-2003, mikolov2013, devlin2014, BlackOut}.  
Our method is a lower bound based approach that follows the 
 variational inference framework. Other rigorous variational lower bounds on the softmax 
have been used before \citep{Bohning92, bouchard_efficient_2007}, however they are not easily scalable since they  
require optimizing data-specific variational parameters. In contrast, the bound we 
introduce in this paper does not contain any variational parameter, which 
greatly facilitates stochastic minibatch training. At the same time it 
can be much tighter than previous bounds \citep{bouchard_efficient_2007} as we will 
demonstrate empirically in several classification datasets. 


\section{One-vs-each lower bound on the softmax \label{sec:theory}} 

Here, we derive the new bound on the softmax (Section \ref{sec:onevsone})  
and we prove its optimality property when performing approximate maximum likelihood 
estimation (Section \ref{sec:optimality}). Such a property holds for 
the {\em non-parametric case}, where we estimate probabilities of the 
form $p(y=k)$, without conditioning on some $\bfx$, so that the score 
functions $f_k(\bfx; \bfw)$ 
reduce to unrestricted parameters $f_k$; see Eq.\ \eqref{eq:softmax1} below.
Finally, we also analyze the related bound derived by 
Bouchard  \citep{bouchard_efficient_2007} and we compare it with our approach 
(Section \ref{sec:bouchnonnegsample}).

\subsection{Derivation of the bound \label{sec:onevsone}}

Consider a discrete random variable $y \in \{1,\ldots,K\}$ that takes the value 
$k$ with probability, 
\begin{equation}
p(y=k) = \text{Softmax}_k(f_1,\ldots,f_K) = \frac{e^{f_k}}
{\sum_{m=1}^K e^{f_m}},
\label{eq:softmax1}
\end{equation}
where each $f_k$ is a free real-valued scalar parameter. We wish to express a lower bound 
on $p(y=k)$ and the key step of our derivation is to re-write $p(y = k)$ as
\begin{equation}
p(y=k) = \frac{1}
{1 + \sum_{m \neq k} e^{- (f_k - f_m)}}.
\label{eq:softmax2}
\end{equation}
Then, by exploiting the fact that for any non-negative numbers $\alpha_1$ and $\alpha_2$ it holds 
$1 + \alpha_1 + \alpha_2 \leq 1 + \alpha_1 + \alpha_2 + \alpha_1 \alpha_2 = (1 + \alpha_1) (1 + \alpha_2)$, and more generally it holds 
$(1 + \sum_{i} \alpha_i) \leq \prod_{i} (1  +  \alpha_i)$ where each $\alpha_i \geq 0 $, we 
obtain the following lower bound on the above probability,
\begin{equation}
p(y=k) 
\geq \prod_{m \neq k} \frac{1}
{1 +  e^{- (f_k - f_m)}}
=   \prod_{m \neq k} \frac{e^{f_k}}
{ e^{f_k} +  e^{f_m}} =  \prod_{m \neq k} \sigma(f_k - f_m) . 
\label{eq:softmaxbound}
\end{equation}
where $\sigma(\cdot)$ denotes the sigmoid function. 
Clearly, the terms in the product are pairwise probabilities each corresponding
to the event $y=k$ conditional on the union of pairs of events, i.e.\  $ y \in \{k,m \}$ where 
$m$ is one of the remaining values. We will refer to this bound as one-vs-each bound 
on the softmax probability, since it involves $K-1$ comparisons of a specific event $y=k$ 
versus each of the $K-1$ remaining events. Furthermore, the above result can be stated 
more generally to define  bounds on arbitrary probabilities as 
the following statement shows.  

\begin{prop} Assume a probability model with state space $\Omega$ and probability measure $P(\cdot)$.  
For any event $A \subset \Omega$ and an associated countable set of disjoint events $\{B_i\}$ 
such that $ \cup_{i} B_i = \Omega \setminus A$, 
it holds
\begin{equation}
P(A) \geq \prod_{i} P(A|A \cup B_i).
\label{eq:generalbound}
\end{equation}
\end{prop}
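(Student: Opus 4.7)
The plan is to mimic the softmax derivation of Section \ref{sec:onevsone}, replacing exponentials by probabilities. The whole argument reduces to the elementary inequality $1 + \sum_i \alpha_i \leq \prod_i (1 + \alpha_i)$ for nonnegative $\alpha_i$ (the same one used to obtain \eqref{eq:softmaxbound}), applied to a suitable choice of ratios of probabilities.

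First, I would use disjointness of the $B_i$ together with $\bigcup_i B_i = \Omega \setminus A$ and countable additivity of $P$ to write $P(A) + \sum_i P(B_i) = 1$. Assuming $P(A) > 0$ (the case $P(A) = 0$ makes the bound trivial), I would divide by $P(A)$ and rearrange to get
\[
P(A) \;=\; \frac{1}{1 + \sum_i \alpha_i}, \qquad \alpha_i := \frac{P(B_i)}{P(A)} \geq 0,
\]
which is the exact analogue of the rewriting \eqref{eq:softmax2}.

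Second, I would apply the product bound to obtain $P(A) \geq \prod_i 1/(1 + \alpha_i)$, and then identify each factor with the claimed conditional probability. Since $A$ and $B_i$ are disjoint, $P(A \cup B_i) = P(A) + P(B_i)$, so
\[
P(A \mid A \cup B_i) \;=\; \frac{P(A)}{P(A) + P(B_i)} \;=\; \frac{1}{1 + \alpha_i},
\]
which closes the chain and yields the stated inequality.

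The one genuine technical point is justifying the product inequality when $\{B_i\}$ is countably infinite. The finite version follows by induction from the binary identity $(1+\alpha)(1+\beta) = 1 + \alpha + \beta + \alpha\beta \geq 1 + \alpha + \beta$. For the countable case, partial sums on the left are nondecreasing, partial products on the right are also nondecreasing (every factor is at least $1$), and the finite inequality passes to the limit by monotone convergence, with both sides allowed to take the value $+\infty$. Taking reciprocals preserves the inequality, and an infinite product of factors in $[0,1]$ is always well-defined in $[0,1]$, so no further convergence subtleties arise.
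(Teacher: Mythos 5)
Your proof is correct and follows essentially the same route as the paper: rewriting $P(A) = P(A)/\bigl(P(A) + \sum_i P(B_i)\bigr)$, dividing through by $P(A)$, and applying the inequality $1 + \sum_i \alpha_i \leq \prod_i (1+\alpha_i)$ to identify each factor with $P(A \mid A \cup B_i)$. Your extra care about the $P(A)=0$ case and the passage to a countably infinite product only makes explicit what the paper leaves implicit.
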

\begin{proof} Given that $P(A) = \frac{P(A)}{P(\Omega)} = \frac{P(A)}{P(A) + \sum_i P(B_i)}$, the result follows by applying the inequality $(1 + \sum_{i} \alpha_i) \leq \prod_{i} (1  +  \alpha_i)$ exactly as done above for the softmax parameterization.
\end{proof}

{\bf Remark.} If the set  $\{B_i\}$ consists of a single event $B$ then by definition $B = \Omega \setminus A$ and the bound is exact since in such case $P(A|A \cup B) = P(A)$. 

Furthermore, based on the above construction we can express 
a full class of hierarchically ordered bounds. For instance, 
if we merge two events $B_i$ and $B_j$ into a single one, then 
the term  $P(A|A \cup B_i) P(A|A \cup B_j)$ in the initial bound is 
replaced with $P(A|A \cup B_i \cup B_j )$ and the associated  
new bound, obtained after this merge, can only become tighter. 
To see a more specific example in the softmax probabilistic model,
assume a small subset of categorical symbols $\mathcal{C}_k$, that does not include $k$, 
and denote the remaining symbols excluding $k$ as $\mathcal{\bar{C}}_k$ so that 
$k \cup \mathcal{C}_k \cup \mathcal{\bar{C}}_k = \{1, \ldots, K\}$. Then, a 
tighter bound, that exists higher in the hierarchy, 
than the one-vs-each bound (see Eq.\ \ref{eq:softmaxbound}) 
takes the form,
\begin{equation}
p(y=k) \geq \text{Softmax}_k(f_k, \bff_{\mathcal{C}_k}) \times  \text{Softmax}_k(f_k,\bff_{\mathcal{\bar{C}}_k}) 
\geq \text{Softmax}_k(f_k, \bff_{\mathcal{C}_k}) \times \prod_{m \in \mathcal{\bar{C}}_k} \sigma(f_k - f_m),
\end{equation} 
where $\text{Softmax}_k(f_k, \bff_{\mathcal{C}_k}) = \frac{e^{f_k}}{e^{f_k} +  \sum_{m \in \mathcal{C}_k} e^{f_m} }$
and $\text{Softmax}_k(f_k, \bff_{\mathcal{\bar{C}}_k}) = \frac{e^{f_k}}{e^{f_k} +  \sum_{m \in \mathcal{\bar{C}}_k} e^{f_m} }$.
 For simplicity of our presentation in the remaining of the paper we do not discuss further 
these more general bounds and we focus only on the one-vs-each bound.  
  
The computationally useful aspect of the bound in Eq.\ (\ref{eq:softmaxbound}) is 
that it factorizes into a product, where each factor depends only on a pair of parameters 
$(f_k,f_m)$. Crucially, this avoids the evaluation of the normalizing constant associated 
with the global probability in Eq.\ \eqref{eq:softmax1} 
and, as discussed in Section \ref{sec:classification}, it leads to  
scalable training using stochastic optimization that can deal with very large $K$. 
Furthermore, approximate maximum likelihood estimation based on the bound 
can be very accurate and, as shown in the next section, it is exact for the non-parametric 
estimation case.  
 
The fact that the one-vs-each bound in \eqref{eq:softmaxbound} 
is a product of pairwise probabilities suggests that there is a connection
with Bradley-Terry (BT) models \citep{bradley1952rank, Huang:2006} for learning individual skills from paired comparisons
and the associated multiclass classification systems obtained by combining binary classifiers, such as one-vs-rest and 
one-vs-one approaches \citep{Huang:2006}. 
Our method differs from BT models, since we do not combine binary probabilistic models 
to a posteriori form a multiclass model. Instead, we wish to develop scalable approximate algorithms that 
can surrogate the training of multiclass softmax-based models by maximizing lower bounds on the 
exact likelihoods of these models.  

\subsection{Optimality of the bound for maximum likelihood estimation
\label{sec:optimality}}
 
Assume a set of observation $(y_1,\ldots,y_N)$ where each 
$y_i \in \{1,\ldots, K\}$.  The log likelihood of the data takes  the form, 
\begin{equation}
\mathcal{L}(\bff) = \log \prod_{i=1}^N p(y_i) = \log \prod_{k=1}^K p(y=k)^{N_k},
\label{eq:exactlik}
\end{equation}
where $\bff = (f_1,\ldots,f_K)$ and $N_k$ denotes the number of data points with value $k$.  
By substituting $p(y=k)$ from Eq.\ (\ref{eq:softmax1}) and then taking derivatives with respect to $\bff$
 we arrive at the standard stationary conditions of the maximum likelihood solution,
\begin{equation}
 \frac{ e^{f_k}}
{\sum_{m=1}^K e^{f_m} } = \frac{N_k}{N}, \ k=1,\ldots,K.
\label{eq:analyticPk}
\end{equation}
These stationary conditions are satisfied for $f_k = \log N_k  + c$ where $c \in \Real$ 
is an arbitrary constant. 
What is rather surprising is that 
the same solutions $f_k = \log N_k  + c$ satisfy also the stationary conditions when maximizing 
a lower bound on the exact log likelihood obtained from  the product of 
one-vs-each probabilities.

More precisely, by replacing $p(y=k)$ with the bound from Eq.\ (\ref{eq:softmaxbound}) we obtain 
a lower bound on the exact log likelihood,
\begin{equation}
\mathcal{F}(\bff) = \log \prod_{k=1}^K \left[ \prod_{m \neq k}  
\frac{e^{f_k}}
{e^{f_k} + e^{f_m}}   \right]^{N_k} 
= \sum_{k > m} \log P(f_k,f_m), 
\label{eq:lowerboundlik}
\end{equation}
where $P(f_k,f_m) = \left[
\frac{e^{f_k}}
{e^{f_k} + e^{f_m}}   \right]^{N_k}  \left[
\frac{e^{f_m}}
{e^{f_k} + e^{f_m}}   \right]^{N_m}$ is a  
likelihood involving only the data of the pair of states 
$(k,m)$, while there exist $K (K-1)/2$ possible such pairs. 
If instead of maximizing the exact log likelihood from Eq.\ \eqref{eq:exactlik}
we maximize the lower bound we obtain the same parameter estimates. 

\begin{prop} The maximum likelihood parameter estimates $f_k = \log N_k  + c, k=1,\ldots,K$ for the exact log likelihood from Eq.\ (\ref{eq:exactlik}) globally also
maximize the lower bound from Eq.\ (\ref{eq:lowerboundlik}). 
\end{prop}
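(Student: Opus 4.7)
The plan is to verify that $f_k=\log N_k+c$ solves the first-order optimality conditions for $\mathcal{F}(\bff)$, and then upgrade this to a global maximum by a concavity argument, since $\mathcal{F}$ is a sum of standard binary-logistic log-likelihoods.

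First I would differentiate $\mathcal{F}(\bff)$. Using $\log P(f_k,f_m) = N_k f_k + N_m f_m - (N_k+N_m)\log(e^{f_k}+e^{f_m})$, the only terms involving $f_k$ in the sum $\sum_{k>m}\log P(f_k,f_m)$ are those pairing $k$ with every $m\neq k$, so
\begin{equation*}
\frac{\partial \mathcal{F}}{\partial f_k}
= \sum_{m\neq k}\Bigl[N_k - (N_k+N_m)\,\sigma(f_k-f_m)\Bigr],
\end{equation*}
where I used $\partial_{f_k}\log(e^{f_k}+e^{f_m})=\sigma(f_k-f_m)$.

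Next I would plug in the candidate $f_k=\log N_k+c$. Under this substitution,
\begin{equation*}
\sigma(f_k-f_m) = \frac{e^{f_k}}{e^{f_k}+e^{f_m}} = \frac{N_k}{N_k+N_m},
\end{equation*}
so every bracket on the right-hand side equals $N_k-(N_k+N_m)\cdot\tfrac{N_k}{N_k+N_m}=0$, and hence $\nabla\mathcal{F}=0$ at this point. Since the same computation works for every $k$, the point is stationary.

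To conclude that this stationary point is a \emph{global} maximum, I would argue concavity. Each factor $\log P(f_k,f_m)$ is, up to constants, the log-likelihood of a binary logistic model with parameter $f_k-f_m$, which is well known to be concave in $(f_k,f_m)$; hence $\mathcal{F}$ is concave as a sum of concave functions. Every critical point of a concave function is a global maximizer, so $f_k=\log N_k+c$ attains the global maximum of $\mathcal{F}$. I expect the main subtlety to be transparency about uniqueness: $\mathcal{F}$ is invariant under $f_k\mapsto f_k+c$, so the maximizers form a one-parameter family (parameterised by $c\in\mathbb{R}$), exactly matching the family of exact MLE solutions, which is what the statement claims.
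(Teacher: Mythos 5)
Your proposal is correct and follows essentially the same route as the paper's proof: compute the first-order conditions of $\mathcal{F}(\bff)$, check that $f_k=\log N_k+c$ satisfies them (your bracketed form is just the paper's stationary condition multiplied through by $N_k$), and conclude globality from concavity. Your explicit justification of concavity via the convexity of $\log(e^{f_k}+e^{f_m})$ and your remark on the shift-invariance of the maximizers are small welcome additions, but the argument is the same.
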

\begin{proof} By computing the derivatives of $\mathcal{F}(\bff)$ 
we obtain the following stationary conditions 
\begin{equation}
K - 1 = \sum_{m \neq k} \frac{N_k + N_m}{N_k}  \frac{e^{f_k}}{e^{f_k} + e^{f_m}}, \ k=1,\ldots,K, 
\end{equation}
which form a system of $K$ non-linear equations over the unknowns $(f_1,\ldots,f_K)$.
By substituting the values $f_k = \log N_k  + c$ we can observe that all $K$ equations are simultaneously satisfied which means that these values are solutions. 
Furthermore, since $\mathcal{F}(\bff)$ is a concave function of $\bff$ 
we can conclude that the solutions $f_k = \log N_k  + c$ globally maximize $\mathcal{F}(\bff)$.
\end{proof}
{\bf Remark.} Not only is $\mathcal{F}(\bff)$ globally maximized by setting 
$f_k = \log N_k  + c$, but also each pairwise likelihood $P(f_k,f_m)$ in Eq.\ 
(\ref{eq:lowerboundlik}) is separately maximized by the same setting 
of parameters. 

\subsection{Comparison with Bouchard's bound \label{sec:bouchnonnegsample}}
 
Bouchard \citep{bouchard_efficient_2007} proposed a related bound 
that next we analyze in terms of its ability to approximate the exact maximum likelihood training 
in the non-parametric case, and then we 
compare it against our method. Bouchard \citep{bouchard_efficient_2007} was motivated 
by the problem of applying variational Bayesian inference to multiclass classification 
and he derived the following upper bound on the log-sum-exp function, 
\begin{equation}
\log \sum_{m=1}^K e^{f_m} \leq \alpha + \sum_{m=1}^K \log \left(1 + e^{f_m - \alpha} \right),
\label{eq:bouchard}
\end{equation}
where $\alpha \in \Real$ is a variational parameter that needs to be optimized in order for the 
bound to become as tight as possible. 
The above induces a lower bound on the softmax probability 
$p(y=k)$ from Eq.\ \eqref{eq:softmax1} that takes the form 
\begin{equation}
p(y=k) \geq \frac{e^{f_k - \alpha}}{\prod_{m=1}^K \left( 1 + e^{f_m - \alpha} \right)}.
\label{eq:softmaxBou}
\end{equation}
This is not the same as Eq.\ (\ref{eq:softmaxbound}), 
since there is not a value for $\alpha$ 
for which the above bound will reduce to our proposed one. For instance, 
if we set $\alpha = f_k$, then Bouchard's bound becomes half the one in Eq.\ 
(\ref{eq:softmaxbound}) due to the extra term $1 + e^{f_k - f_k} = 2$ in the product 
in the denominator.\footnote{Notice that the 
product in Eq.\ (\ref{eq:softmaxbound}) excludes the value $k$, while Bouchard's bound includes it.} 
Furthermore, such a value for 
$\alpha$ may not be the optimal one and in practice $\alpha$ must be
chosen by minimizing the upper bound in Eq.\ \eqref{eq:bouchard}. 
While such an optimization is a convex problem, it requires iterative optimization 
since there is not in general an analytical solution for $\alpha$.  
However, for the simple case where $K=2$ we can analytically find the 
optimal $\alpha$ and the optimal $\bff$ parameters. 
The following proposition carries out this analysis and provides a clear understanding  
of how Bouchard's bound behaves when applied for approximate maximum likelihood estimation.   

\begin{prop} Assume that $K=2$ and we approximate the probabilities
$p(y=1)$ and $p(y=2)$ from (\ref{eq:softmax1}) with the corresponding 
Bouchard's bounds given by $\frac{e^{f_1 - \alpha}}{(1 + e^{f_1 - \alpha}) 
(1 + e^{f_2 - \alpha})}$  and $\frac{e^{f_2 - \alpha}}{(1 + e^{f_1 - \alpha}) 
(1 + e^{f_2 - \alpha})}$. These bounds are used to approximate the maximum
likelihood solution by maximizing a bound $\mathcal{F}(f_1,f_2,\alpha)$ 
which is globally maximized for  
\begin{equation}
\alpha = \frac{f_1 + f_2}{2}, \ \ 
f_k = 2 \log N_k  +  c, \ \ k=1,2.  
\label{eq:Bouchalphaf1f2}
\end{equation}
\end{prop}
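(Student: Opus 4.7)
The plan is to write out $\mathcal{F}(f_1,f_2,\alpha)$ explicitly, reduce the number of effective variables by an obvious reparametrization, solve the resulting stationary equations in closed form, and then argue global optimality from concavity.

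First I would form the log of the product of Bouchard bounds raised to $N_1$ and $N_2$. Letting $N = N_1 + N_2$, this gives
\begin{equation}
\mathcal{F}(f_1,f_2,\alpha) = N_1(f_1-\alpha) + N_2(f_2-\alpha) - N\bigl[\log(1+e^{f_1-\alpha}) + \log(1+e^{f_2-\alpha})\bigr].
\end{equation}
The key observation is that $\mathcal{F}$ depends on $(f_1,f_2,\alpha)$ only through the differences $g_k := f_k-\alpha$, so the problem is effectively two-dimensional with a one-parameter family of optima along the diagonal direction $(f_1,f_2,\alpha) \mapsto (f_1+c,f_2+c,\alpha+c)$.

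Next I would take partial derivatives with respect to $g_1$ and $g_2$. Because $\frac{d}{dg}\log(1+e^g) = \sigma(g)$, the stationary conditions collapse to the two decoupled equations $\sigma(g_k) = N_k/N$ for $k=1,2$, whose solutions are $g_1 = \log(N_1/N_2)$ and $g_2 = \log(N_2/N_1) = -g_1$. Plugging these into the definition $g_k = f_k - \alpha$, the relation $g_1 + g_2 = 0$ immediately yields $\alpha = (f_1+f_2)/2$, and then $f_1 - f_2 = g_1 - g_2 = 2\log(N_1/N_2) = 2\log N_1 - 2\log N_2$, which is consistent with writing $f_k = 2\log N_k + c$ for the residual free scalar $c$ (corresponding exactly to the diagonal invariance identified above).

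For global optimality I would observe that $-\log(1+e^g)$ is concave in $g$, so $\mathcal{F}$ is a concave function of $(g_1,g_2)$, and hence concave in $(f_1,f_2,\alpha)$ through the linear change of coordinates. Therefore any stationary point is a global maximum, which establishes the claim. No step is really an obstacle here; the only thing one must be careful about is the gauge degree of freedom, i.e.\ acknowledging that the optimum is a line rather than a point and that the constant $c$ in $f_k = 2\log N_k + c$ parametrizes exactly this line once $\alpha = (f_1+f_2)/2$ is imposed.
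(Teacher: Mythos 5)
Your proof is correct. The route differs from the paper's in its organization: the paper maximizes sequentially, first profiling out $\alpha$ by noting this is equivalent to minimizing Bouchard's upper bound $\alpha + \log(1+e^{f_1-\alpha}) + \log(1+e^{f_2-\alpha})$ on the log-sum-exp (a convex problem whose stationary condition $\sigma(f_1-\alpha)+\sigma(f_2-\alpha)=1$ gives $\alpha=(f_1+f_2)/2$), then substituting back and \emph{verifying} that $f_k = 2\log N_k + c$ satisfies the resulting stationary conditions in $(f_1,f_2)$, with joint concavity supplying global optimality. You instead exploit the gauge invariance directly, reparametrizing to $g_k = f_k - \alpha$, which decouples the stationary conditions into $\sigma(g_k)=N_k/N$ and lets you \emph{solve} them in closed form rather than check a guessed solution; translating back through $g_1+g_2=0$ and $g_1-g_2=2\log(N_1/N_2)$ recovers exactly the stated line of maximizers, and concavity of $\mathcal{F}$ as a concave function composed with a linear map gives globality just as in the paper. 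Your version has the minor advantages of deriving the solution constructively and making explicit that the optimum is precisely the one-parameter family parametrized by $c$ (the flat direction $(f_1,f_2,\alpha)\mapsto(f_1+c,f_2+c,\alpha+c)$); the paper's version has the interpretive advantage of showing that the optimal $\alpha$ coincides with the minimizer of Bouchard's log-sum-exp bound, which connects to the surrounding discussion of how $\alpha$ is tuned in general. The only implicit assumption in both arguments is $N_1,N_2>0$, without which $\log N_k$ and $\sigma(g_k)=N_k/N$ have no interior solution.
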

The proof of the above is given in the Appendix. 
Notice that the above estimates are biased so that the probability of the most populated 
class (say the $y=1$ for 
which $N_1>N_2$) is overestimated while the other probability is underestimated. This 
is due to the factor $2$ that multiplies $\log N_1$ and  $\log N_2$ in 
\eqref{eq:Bouchalphaf1f2}. 

Also notice that the solution $\alpha = \frac{f_1 + f_2}{2}$ is not a general trend, i.e.\
for $K>2$ the optimal $\alpha$ is not the mean of $f_k$s. In such cases approximate maximum 
likelihood estimation based on Bouchard's bound requires iterative optimization. 
Figure \ref{fig:toycomparisons}a shows some estimated softmax probabilities, using a dataset of 
$200$ points each taking one out of ten values, where $\bff$ is found by exact maximum likelihood, 
the proposed one-vs-each bound and Bouchard's method. As expected estimation based on the bound in Eq.\ \eqref{eq:softmaxbound} gives the exact probabilities, while Bouchard's bound tends to overestimate 
large probabilities and underestimate small ones. 

\begin{figure*}[!htb]
\centering
\begin{tabular}{ccc}
{\includegraphics[scale=0.23]
{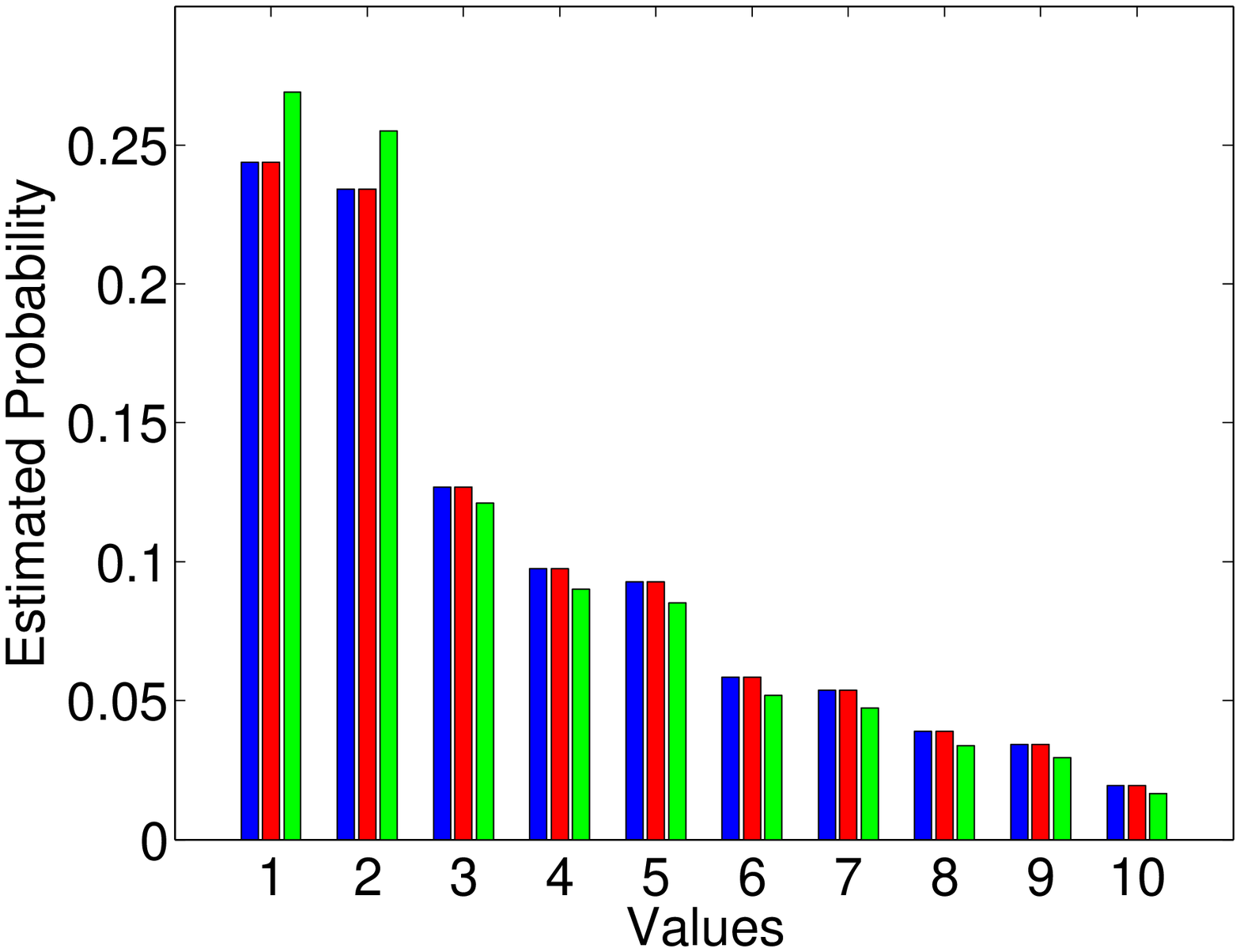}} &
{\includegraphics[scale=0.23]
{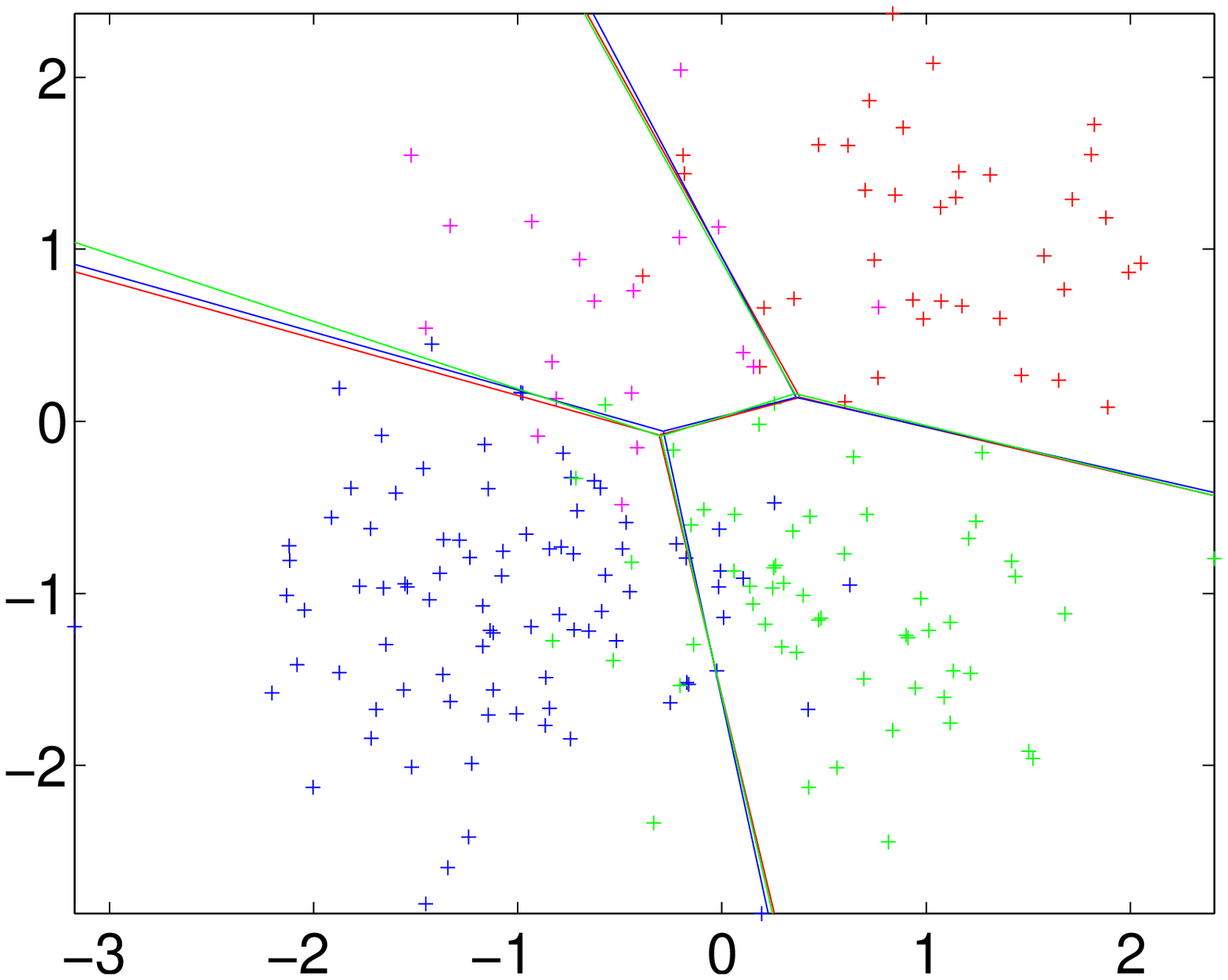}} &
{\includegraphics[scale=0.23]  
{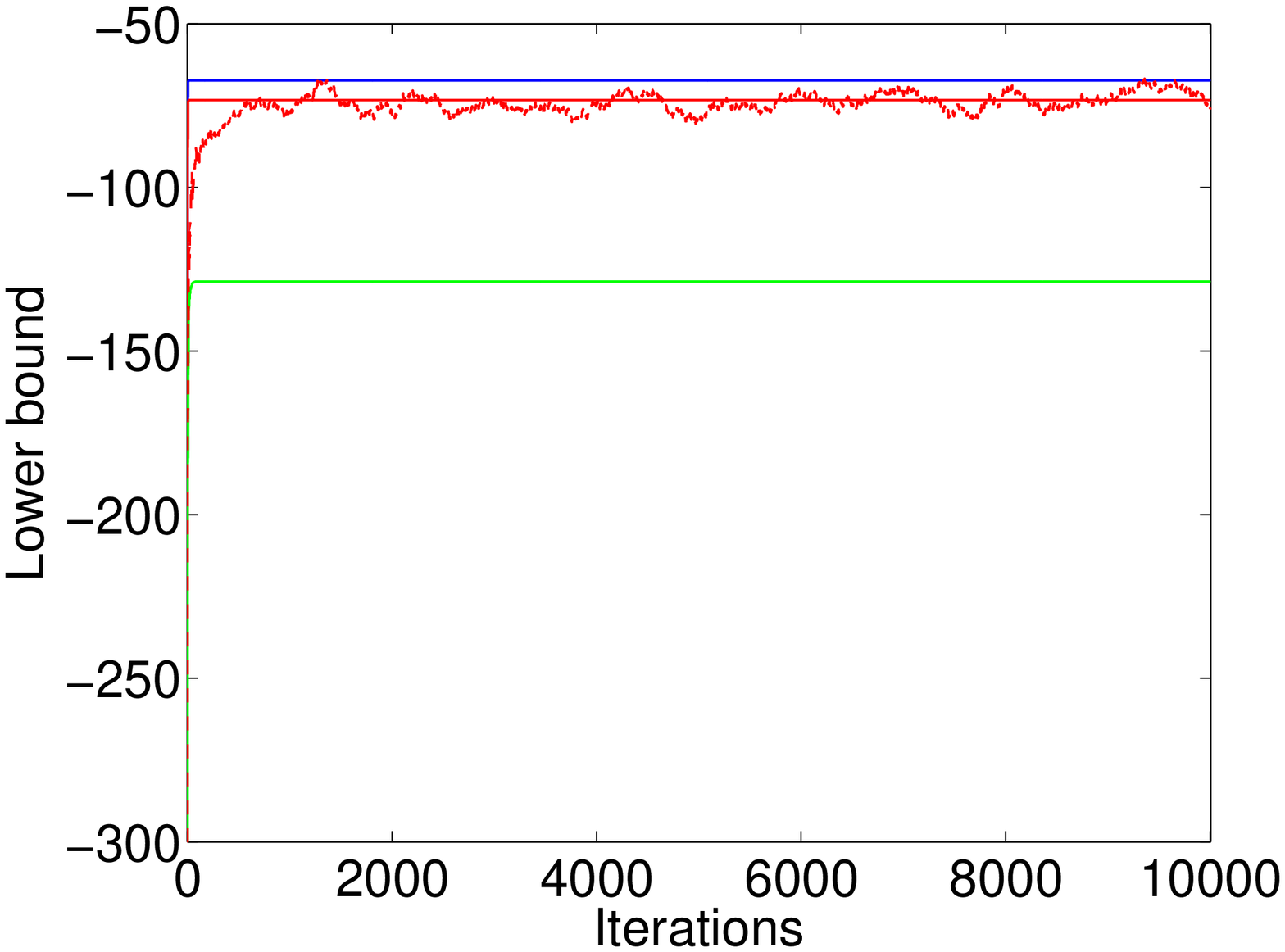}} \\
(a) & (b)  & (c) 
\end{tabular}
\caption{(a) shows the probabilities estimated by exact softmax (blue bar), one-vs-each approximation (red bar) and Bouchard's 
method (green bar). (b) shows the 5-class artificial data together with the decision boundaries found by exact softmax (blue line), one-vs-each (red line) and Bouchard's bound (green line). 
(c) shows the maximized (approximate) log likelihoods for the different approaches when applied to 
the data of panel (b) (see Section \ref{sec:classification}). Notice that the blue line in (c) is the exact maximized log likelihood while the remaining lines correspond to lower bounds. 
 } 
\label{fig:toycomparisons}
\end{figure*}



 \vspace{-4mm}
 
\section{Stochastic optimization for extreme 
classification \label{sec:classification}}

\vspace{-2mm}

Here, we return to the general form of the softmax probabilities as defined by 
Eq.\ (\ref{eq:softmaxGen}) where the score functions 
are indexed by input $\bfx$ and parameterized by $\bfw$. We consider 
a classification task where given a training set $\{\bfx_n, y_n \}_{n=1}^N$, where $y_n \in \{1,\dots,K\}$, we wish to fit the parameters $\bfw$ by maximizing 
the log likelihood, 
\begin{equation}
\mathcal{L} =  \log \prod_{n=1}^N \frac{e^{f_{y_n}(\bfx_n; \bfw)}} {\sum_{m=1}^K e^{f_m(\bfx_n; \bfw)  }}.
\label{eq:Lwx}
\end{equation}
When the number of training instances is very large, the above maximization can be carried out by applying stochastic gradient descent (by minimizing $-\mathcal{L}$) 
where we cycle over minibatches. 
However, this stochastic optimization procedure cannot deal with large values of $K$
because the normalizing constant in the softmax couples all scores functions so that the log likelihood 
cannot be expressed as a sum across class labels. 
To overcome this, we can use the one-vs-each lower bound on the softmax probability from Eq.\ (\ref{eq:softmaxbound})
and obtain the following lower bound on the previous log likelihood, 
\begin{equation}
\mathcal{F} = \log \prod_{n=1}^N \prod_{m \neq y_n} 
\frac{1} {1 + e^{ - [f_{y_n}(\bfx_n; \bfw)  - f_m(\bfx_n; \bfw) ] }} 
= - \sum_{n=1}^N \sum_{m \neq y_n} 
\log \left(1 + e^{ - [f_{y_n}(\bfx_n; \bfw)  - f_m(\bfx_n; \bfw) ]} \right)
\label{eq:onevsonecostClass}
\end{equation}
which now consists of a sum over both data points and labels. Interestingly, 
the sum over the labels, $\sum_{m \neq y_n}$, runs over all remaining 
classes that are different from the label $y_n$ assigned to $\bfx_n$.
Each term in the sum is a logistic regression cost, that depends on the pairwise score difference $f_{y_n}(\bfx_n; \bfw) - f_m(\bfx_n; \bfw)$,
and encourages the $n$-th data point to get separated from the $m$-th remaining class. The above lower bound can be optimized by  stochastic gradient descent 
by subsampling terms in the double sum in Eq.\ (\ref{eq:onevsonecostClass}), thus resulting in a doubly stochastic approximation scheme. Next we further discuss
the stochasticity associated with subsampling remaining classes. 

The gradient for the cost associated with a single training instance $(\bfx_n, y_n)$ is  
\begin{equation}
\nabla \mathcal{F}_n = \sum_{m \neq y_n} \sigma\left( f_m(\bfx_n; \bfw) - f_{y_n}(\bfx_n; \bfw) \right) \left[ \nabla_{\bfw} f_{y_n}(\bfx_n; \bfw)  -   \nabla_{\bfw} f_m(\bfx_n; \bfw) \right]. 
\label{eq:prodgrad}
\end{equation}
This gradient consists of a weighted sum  where the sigmoidal weights   
$\sigma\left( f_m(\bfx_n; \bfw) - f_{y_n}(\bfx_n; \bfw) \right)$ quantify the contribution of the remaining classes to the whole gradient;
the more a remaining class overlaps with $y_n$ (given $\bfx_n$) the higher its contribution is.
A simple way to get an unbiased stochastic estimate of \eqref{eq:prodgrad} is to randomly subsample 
a small subset of remaining classes from the set $\{m | m \neq y_n\}$. 
More advanced schemes could be based on
importance sampling where we introduce a proposal distribution $p_{n}(m)$ defined on the set $\{m | m \neq y_n\}$ that could favor 
selecting classes with large sigmoidal weights. While such more advanced schemes could reduce variance, 
they require prior knowledge (or on-the-fly learning) about how classes overlap with one another. 
Thus, in Section \ref{sec:experiments} we shall experiment only with the simple random subsampling approach and leave the above advanced schemes
for future work.  

To illustrate the above stochastic gradient descent algorithm we simulated a two-dimensional data set of $200$ instances, shown 
in Figure \ref{fig:toycomparisons}b, that belong to five classes. We consider a linear classification model where the score
functions take the form $f_k(\bfx_n, \bfw) = \bfw_k^T \bfx_n$ and where the full set of parameters is $\bfw = (\bfw_1,\dots,\bfw_K)$. 
We consider minibatches of size ten to approximate 
the sum $\sum_n$ and subsets of remaining classes of size one to approximate 
$\sum_{m \neq y_n}$. Figure  \ref{fig:toycomparisons}c
shows the stochastic evolution of the approximate log likelihood (dashed red line), i.e.\ the unbiased subsampling based approximation of \eqref{eq:onevsonecostClass},
together with the maximized exact softmax log likelihood (blue line), the non-stochastically maximized approximate lower bound from   
\eqref{eq:onevsonecostClass} (red solid line) and Bouchard's method (green line). To apply Bouchard's method we 
construct a lower bound on the log likelihood by replacing each softmax probability with the bound from  
\eqref{eq:softmaxBou} where we also need to optimize a separate variational parameter $\alpha_n$ for each data point. 
As shown in Figure \ref{fig:toycomparisons}c our method provides a tighter lower bound than Bouchard's method despite the 
fact that it does not contain any 
variational parameters. Also, Bouchard's method can become very slow when combined with stochastic gradient descent since it requires tuning a separate variational parameter $\alpha_n$ for each 
training instance. Figure \ref{fig:toycomparisons}b also shows the decision boundaries discovered by the exact 
softmax, one-vs-each bound and Bouchard's bound. 
Finally, the actual parameters values 
found by maximizing the one-vs-each bound were remarkably close (although not identical) 
to the parameters found by the exact softmax.  


\vspace{-2mm}

\section{Experiments \label{sec:experiments}} 

\vspace{-2mm}

\subsection{Toy example in large scale non-parametric estimation \label{sec:largedensity}}
\vspace{-1mm}

Here, we illustrate the ability to stochastically maximize the bound 
in Eq.\ \eqref{eq:lowerboundlik} for the simple nonparametric estimation case. 
In such case, we can also maximize the bound based on the analytic formulas 
and therefore we will be able to test how well the 
stochastic algorithm can approximate the optimal/known solution. We consider 
a data set of $N = 10^6$ instances 
each taking one out of $K = 10^4$ possible categorical values. The data were generated 
from a distribution $p(k) \propto u_k^2$, where each $u_k$ was randomly chosen in 
$[0,1]$. The probabilities estimated based on the analytic formulas are shown 
in Figure \ref{fig:DensityLarge}a. To stochastically estimate these probabilities  we 
follow the doubly stochastic framework of Section \ref{sec:classification} so that 
we subsample data instances of minibatch size $b=100$ and for each instance 
we subsample $10$ remaining categorical values. 
We use a learning rate initialized to $0.5/b$ (and then decrease it by a factor of $0.9$ after each 
epoch) and performed $2 \times 10^5$ iterations. 
Figure \ref{fig:DensityLarge}b shows the final values for the estimated probabilities, 
while Figure \ref{fig:DensityLarge}c shows the evolution of the estimation error 
during the optimization iterations. We can observe that the algorithm performs 
well and exhibits a typical stochastic approximation convergence.  

\begin{figure*}[!htb]
\centering
\begin{tabular}{ccc}
{\includegraphics[scale=0.22]
{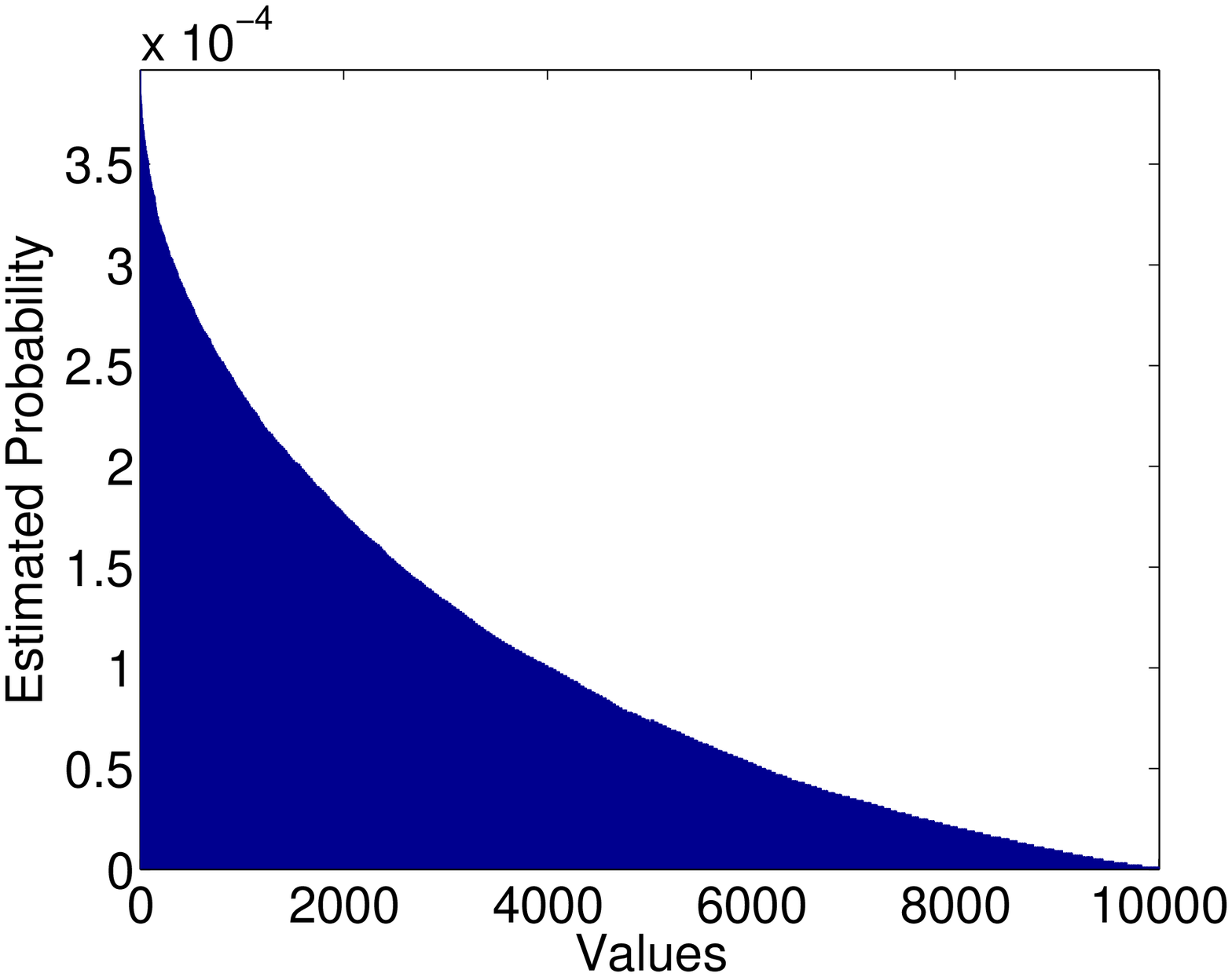}} &
{\includegraphics[scale=0.22]
{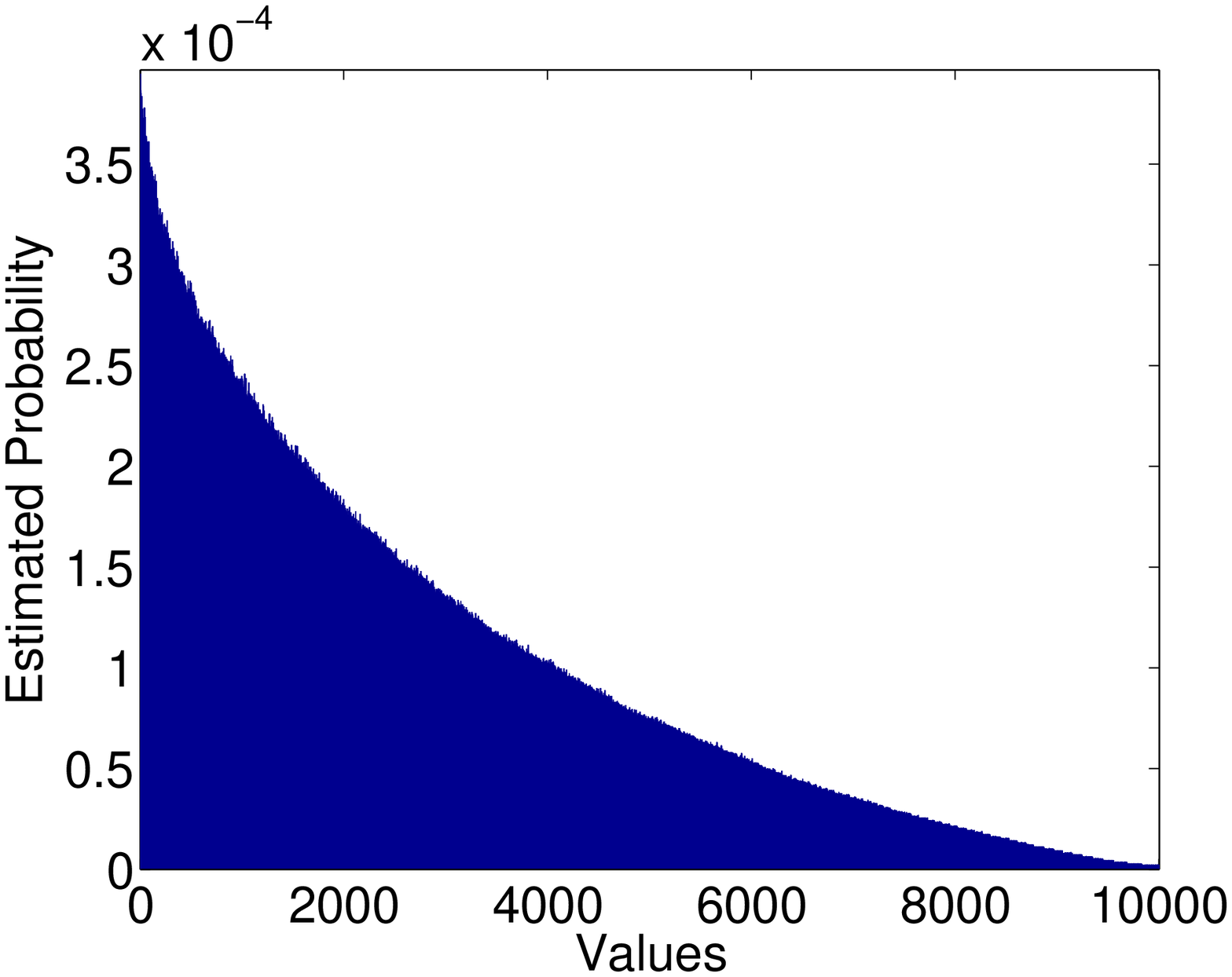}} &  
{\includegraphics[scale=0.22]  
{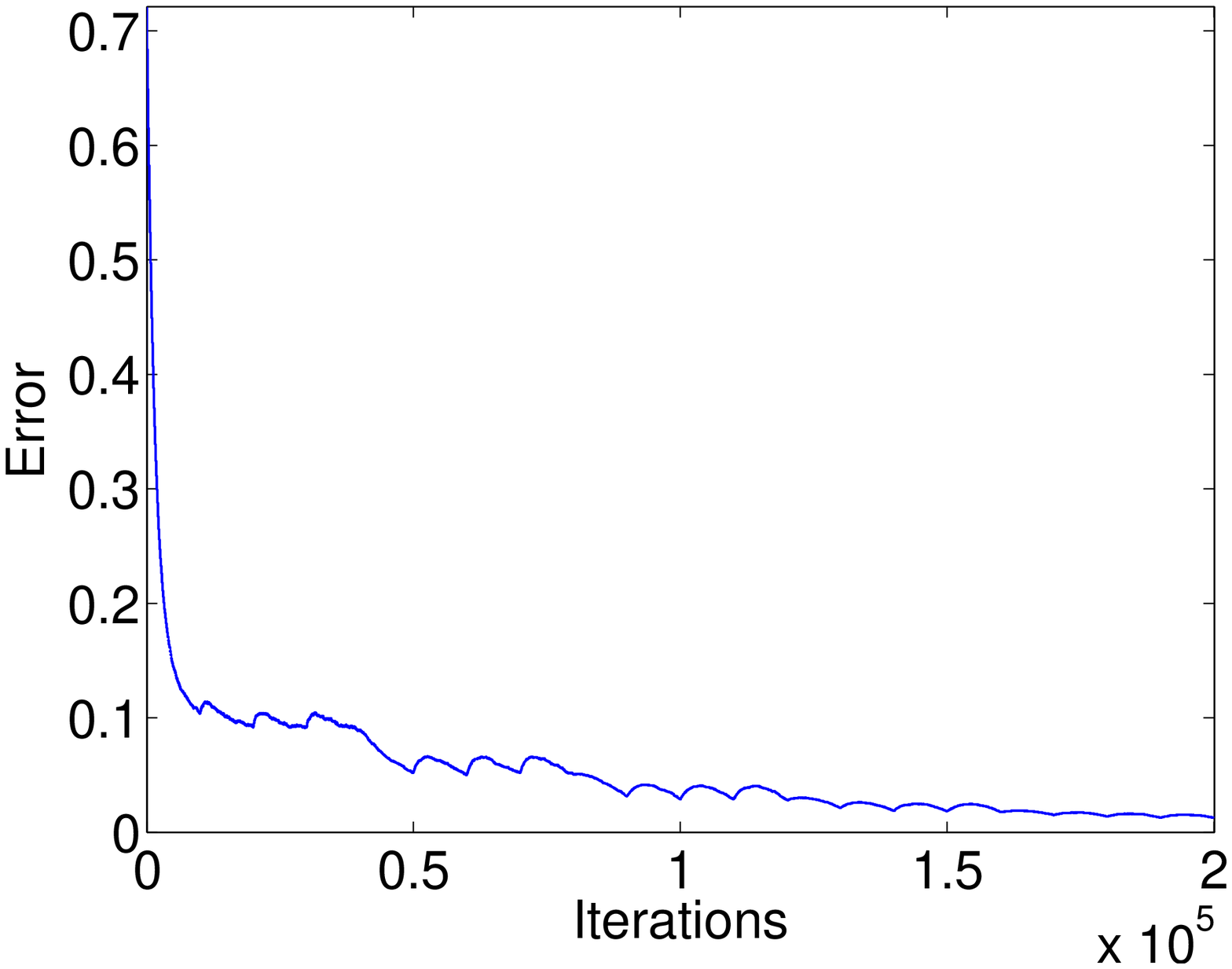}} \\
(a) & (b) & (c)
\end{tabular}
\vspace{-2mm}
\caption{(a) shows the optimally estimated probabilities which have been sorted for visualizations purposes. 
(b) shows the corresponding probabilities estimated by stochastic optimization. (c) shows the 
absolute norm for the vector of differences between exact estimates and stochastic estimates.} 
\label{fig:DensityLarge}
\end{figure*}


\vspace{-3mm}
\subsection{Classification \label{sec:experimClass}} 
\vspace{-2mm}

{\bf Small scale classification comparisons.} 
Here, we wish to investigate
whether the proposed lower bound on the softmax is a good surrogate for 
exact softmax training in classification. More precisely, we wish to 
compare the  parameter estimates obtained by the one-vs-each 
bound with the estimates obtained by exact softmax training. 
To quantify closeness we use the normalized absolute 
norm   
\begin{equation}
\text{norm} = \frac{|\bfw_{\text{softmax}}  - \bfw_*|}{|\bfw_{\text{softmax}}|},
\label{eq:norm}
\end{equation}
where $\bfw_{\text{softmax}}$ denotes the parameters obtained by exact softmax 
training and $\bfw_*$ denotes estimates obtained by approximate training. 
Further, we will also report predictive performance measured by classification 
error and negative log predictive density (nlpd) averaged across test data,
\begin{equation}
\text{error} = (1/N_{test}) \sum_{i=1}^{N_{test}} I( y_i \neq t_i), \quad 
\text{nlpd} =  (1/N_{test}) \sum_{i=1}^{N_{test}} - \log p(t_i|\bfx_i), 
\end{equation}  
where $t_i$ denotes the true label of a test point and $y_i$ the predicted one. 
We trained the linear multiclass model of Section \ref{sec:classification} 
with the following alternative methods: exact softmax training (\textsc{soft}), 
the one-vs-each bound (\textsc{ove}), 
the stochastically optimized  one-vs-each bound (\textsc{ove-sgd}) and
 Bouchard's bound (\textsc{bouchard}). For all approaches, the associated cost function 
was maximized  together with an added regularization penalty term, 
$-\frac{1}{2} \lambda ||\bfw||^2$, which ensures that the global maximum of the cost 
function is achieved for finite $\bfw$.
Since we want to investigate how well we surrogate exact 
softmax training, we used the same fixed value $\lambda=1$ in all experiments.    
  
We considered three small scale multiclass classification datasets:   
\textsc{mnist}\footnote{\url{http://yann.lecun.com/exdb/mnist}},  
\textsc{20news}\footnote{\url{http://qwone.com/~jason/20Newsgroups/}} and \textsc{bibtex} 
\citep{Katakis08multilabeltext}; see Table \ref{table:datasets} for details. 
 Notice that \textsc{bibtex} is originally a
 multi-label classification dataset \citep{NIPS2015_5969}. 
where each example may have more than one labels. 
Here, we maintained only a single label for each data point in order to apply
standard multiclass classification. The maintained label was the first label 
appearing in each data entry in the repository files\footnote{\url{http://research.microsoft.com/en-us/um/people/manik/downloads/XC/XMLRepository.html}} from which we obtained the data. 

Figure \ref{fig:smallScaleClass} displays convergence 
of the lower bounds (and for the exact softmax cost) for all methods. 
Recall, that the methods \textsc{soft}, \textsc{ove} and \textsc{bouchard} 
are non-stochastic and therefore their optimization can be carried out  
by standard gradient descent. Notice that in all three datasets 
the one-vs-each bound gets much closer to the exact softmax cost
compared to Bouchard's bound. Thus, \textsc{ove} tends to give a tighter bound 
despite that it does not contain any variational parameters, while 
\textsc{bouchard} has $N$ extra variational parameters, i.e.\ 
as many as the training instances.  
The application of \textsc{ove-sgd} method (the stochastic version of \textsc{ove}) 
is based on a doubly stochastic scheme where we subsample minibatches of size $200$ and subsample 
remaining classes of size one. We can observe that \textsc{ove-sgd} is able to stochastically approach its 
maximum value which corresponds to \textsc{ove}. 

Table \ref{table:scores} shows the parameter closeness score 
from Eq.\ \eqref{eq:norm} as well as the classification predictive scores. 
We can observe that \textsc{ove} and \textsc{ove-sgd} 
provide parameters closer to those of \textsc{soft} than the parameters provided by 
\textsc{bouchard}. Also, the predictive scores for \textsc{ove}  and  \textsc{ove-sgd} 
are similar to \textsc{soft}, although they tend to be slightly worse. Interestingly, 
\textsc{bouchard} gives the best classification error, even better than the exact softmax training, 
but at the same time it always gives the worst nlpd which suggests sensitivity to overfitting. 
However, recall that the regularization parameter $\lambda$ was fixed to the value one and it was not optimized
separately for each method using cross validation.    
Also notice that \textsc{bouchard} cannot be easily scaled up (with stochastic optimization) 
to massive datasets since it introduces an extra variational parameter for each training 
instance.  

{\bf Large scale classification.} Here, 
we consider \textsc{amazoncat-13k} (see footnote 4) which is a  
large scale classification dataset. 
This dataset is originally multi-labelled \citep{NIPS2015_5969} and here 
we maintained only a single label, as done for the \textsc{bibtex} dataset,
in order to apply standard multiclass classification. This dataset is also highly 
imbalanced since there are about $15$ classes having the half of the training instances while 
they are many classes having very few (or just a single) training instances. 
   
\begin{table}[t]
  \caption{Summaries of the classification datasets.}
  \label{table:datasets}
  \centering
  \begin{tabular}{lllll}
    \toprule
    Name       & Dimensionality  & Classes & Training examples & Test examples \\
    \midrule
    \textsc{mnist}      & 784        & 10      & 60000  & 10000  \\
    \textsc{20news}     & 61188      & 20      & 11269  & 7505 \\
    \textsc{bibtex}     & 1836       & 148     & 4880    & 2515  \\
   \textsc{amazoncat-13k}   & 203882     & 2919    & 1186239  &  306759 \\   
    \bottomrule
  \end{tabular}
\end{table}

\begin{table}[t]
  \caption{Score measures for the small scale classification datasets.}
  \label{table:scores}
  \centering
  \begin{tabular}{lllll}
    \toprule
    &  \textsc{soft}  &  \textsc{bouchard} & \textsc{ove}  & \textsc{ove-sgd}  \\ 
 &  (error, nlpd)  &  (norm, error, nlpd)  &  (norm, error, nlpd)  & (norm, error, nlpd) \\ 
\midrule
\textsc{mnist}  &  (0.074, 0.271)  & (0.64, 0.073, 0.333)  & (0.50, 0.082, 0.287)  &  (0.53, 0.080, 0.278)  \\ 
\textsc{20news}  &  (0.272, 1.263)  & (0.65, 0.249, 1.337)  & (0.05, 0.276, 1.297)  &  (0.14, 0.276, 1.312)  \\ 
\textsc{bibtex}   &  (0.622, 2.793)  & (0.25, 0.621, 2.955)  & (0.09, 0.636, 2.888)  &  (0.10, 0.633, 2.875)  \\ 
 \bottomrule
  \end{tabular}
\end{table}
\begin{figure*}[!htb]
\centering
\begin{tabular}{cccc}
{\includegraphics[scale=0.16]
{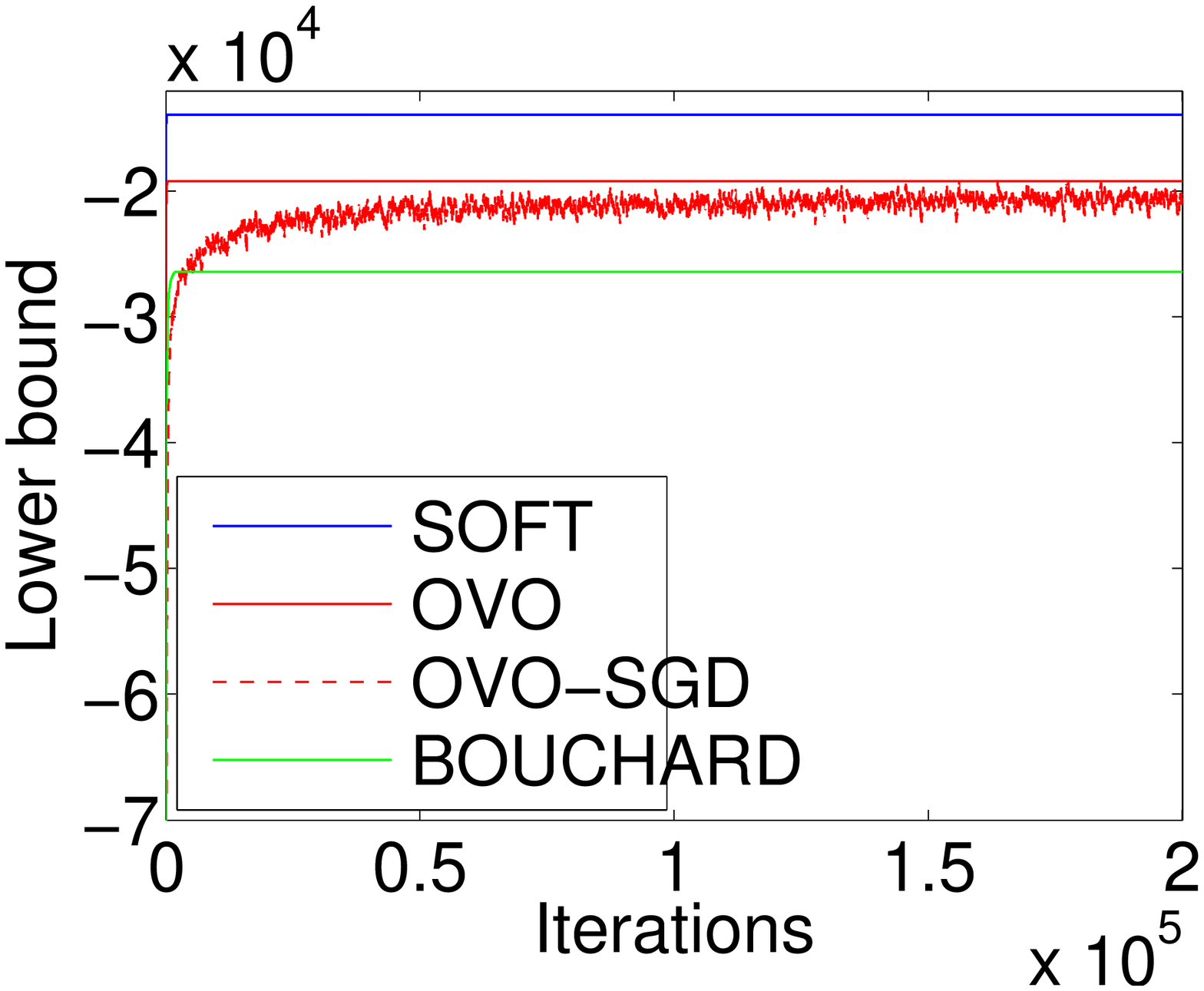}} &
{\includegraphics[scale=0.16]
{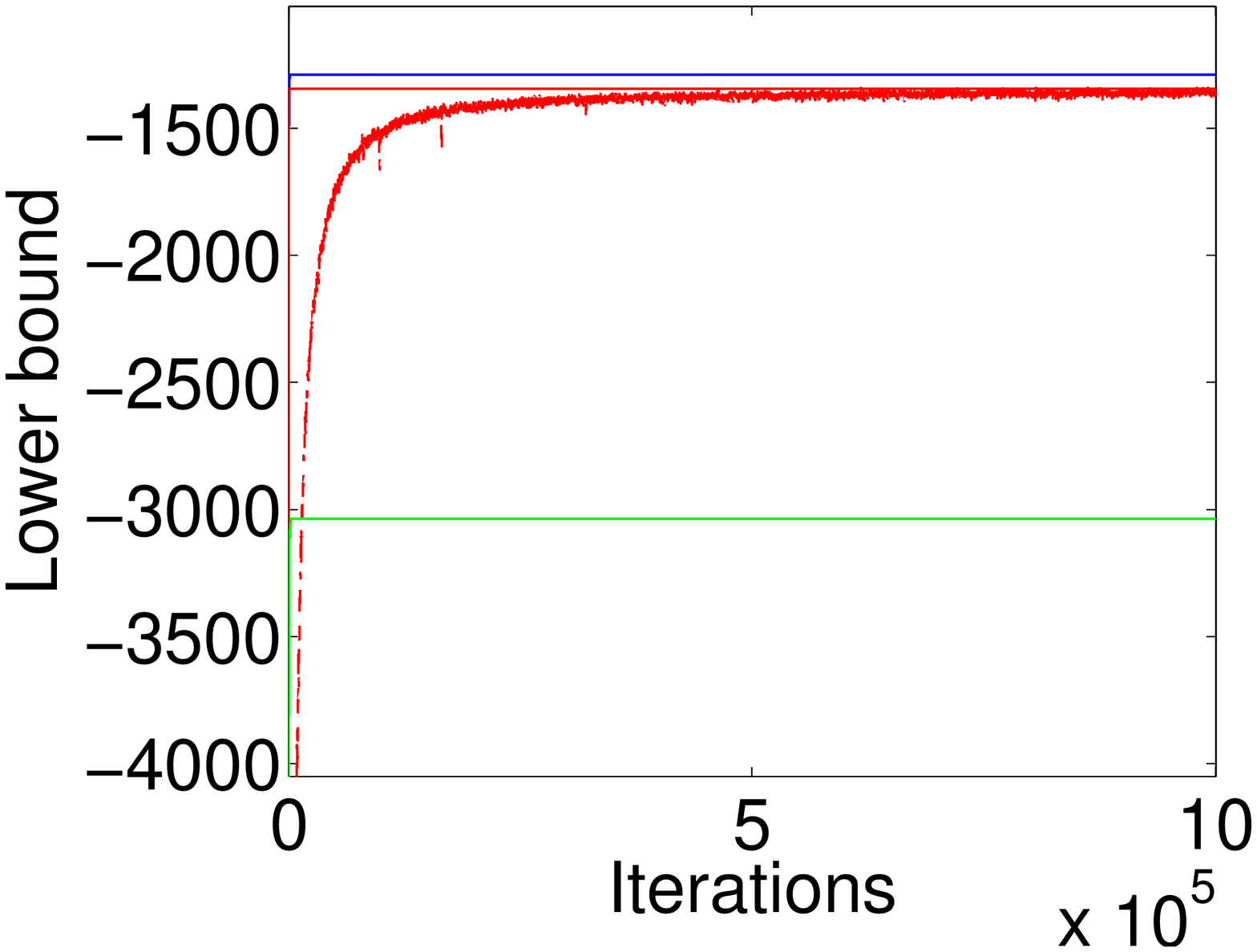}} &
{\includegraphics[scale=0.16]  
{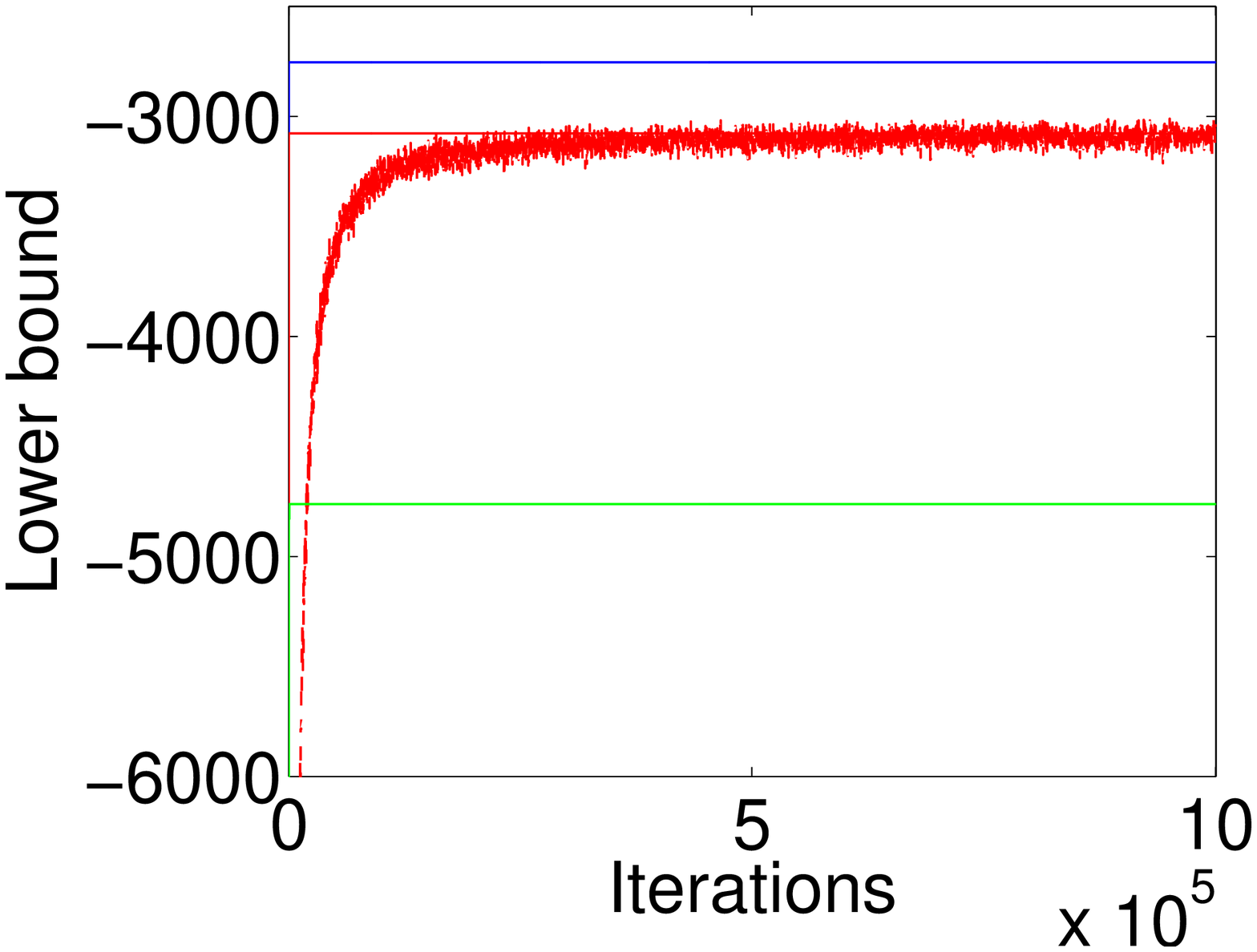}} &
{\includegraphics[scale=0.16]  
{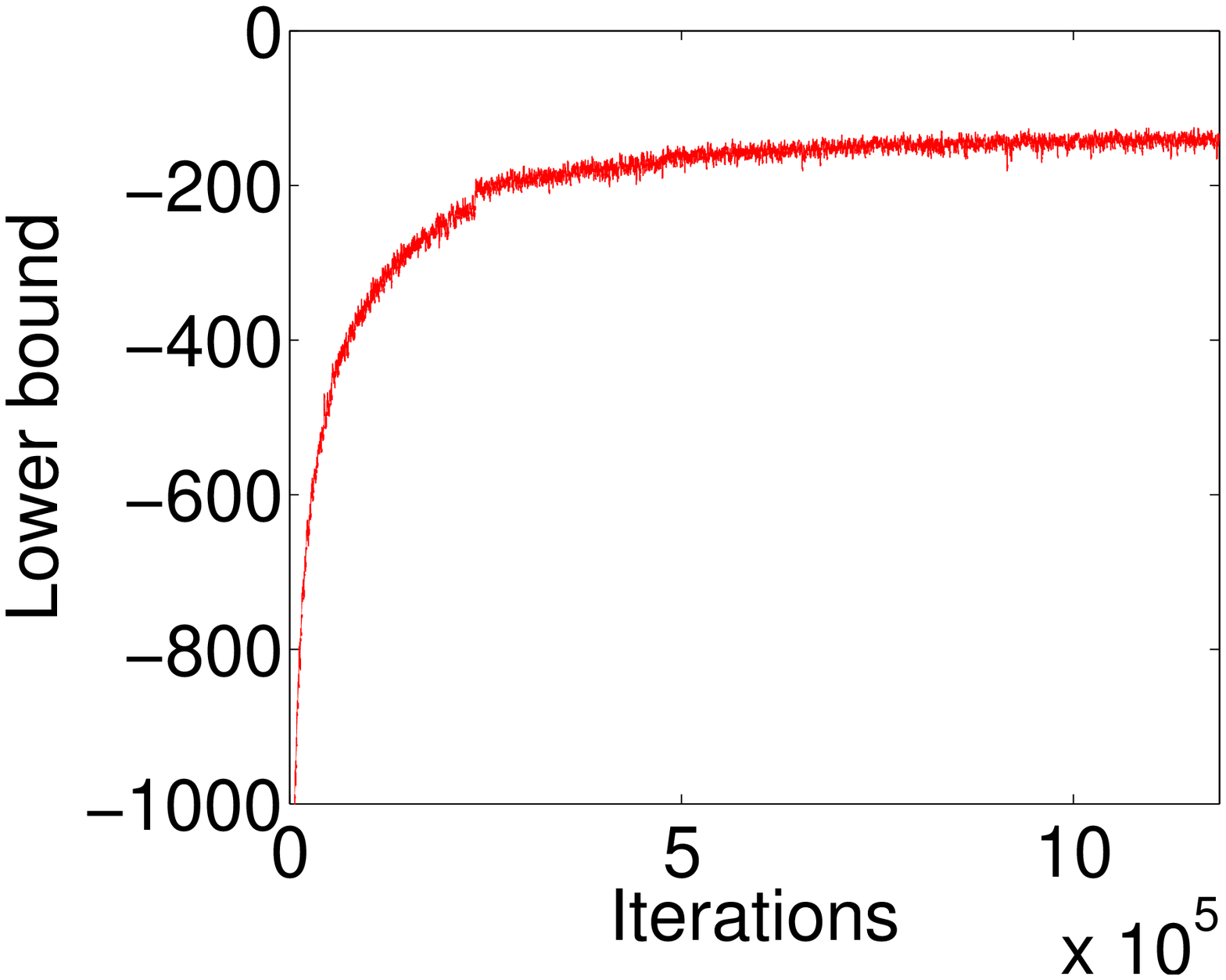}} \\
(a) & (b) & (c) & (d)
\end{tabular}
\vspace{-2mm}
\caption{(a) shows the evolution of the lower bound values for \textsc{mnist}, (b)  
for \textsc{20news} and (c) for \textsc{bibtex}. For more clear visualization the bounds of
the stochastic \textsc{ove-sgd} have been smoothed using a rolling window of $400$ previous values. (d) 
shows the evolution of the  \textsc{ove-sgd}  lower bound (scaled to correspond to a single data point) 
in the large scale \textsc{amazoncat-13k} dataset. Here, the plotted values 
have been also smoothed using a rolling window of size $4000$ and then thinned by a factor of $5$.
} 
\label{fig:smallScaleClass}
\end{figure*}

Further, notice that in this large dataset the number of parameters we need to estimate 
for the linear classification model is very large:
$K \times (D+1) = 2919 \times 203883$  parameters where the plus one accounts for the biases. 
All methods apart from \textsc{ove-sgd} are practically very slow in this massive dataset, 
and therefore we consider $\textsc{ove-sgd}$ which is scalable.    

We applied \textsc{ove-sgd} where at each stochastic gradient update we consider a single training 
instance (i.e.\ the minibatch size was one) 
and for that instance we randomly select five remaining classes. This leads to sparse parameter
updates, where the score function parameters of only six classes 
(the class of the current training instance 
plus the remaining five ones) are updated at each iteration. 
We used a very small learning rate having 
value $10^{-8}$ and we performed five epochs across the full dataset, that is 
we performed in total $5 \times 1186239$ stochastic gradient updates. After each epoch we halve the 
value of the learning rate before next epoch starts.  By taking into account also the sparsity of the 
input vectors each iteration is very fast and full training is completed in just $26$ minutes 
in a stand-alone PC. The evolution of the variational lower bound that indicates convergence is shown
in Figure \ref{fig:smallScaleClass}d. Finally, the classification error in test data was 
$53.11\%$ which is significantly better than random guessing or by a method that decides
 always the most populated class (where in \textsc{amazoncat-13k} the most populated class 
 occupies the $19 \%$ of the data so the error of that method is around $79\%$). 
  
\vspace{-2mm}

\section{Discussion}

\vspace{-2mm}

We have presented the one-vs-each lower bound on softmax probabilities and we have analyzed its theoretical 
properties. This bound is just the most extreme case of a full family
of hierarchically ordered bounds. We have explored the ability of the bound to perform parameter estimation 
through stochastic optimization in models having large number of categorical symbols, and we have demonstrated this ability 
to classification problems. 
  
There are several directions for future research. Firstly, it is worth investigating the 
usefulness of the bound in different applications from classification, such as for learning word embeddings
in natural language processing and for training recommendation systems. Another interesting 
direction is to consider the bound not for point estimation, as done in this paper, but for 
Bayesian estimation using variational inference. 

\vspace{-2mm}
\subsection*{Acknowledgments}
\vspace{-2mm}
We thank the reviewers for insightful comments. We would like also to thank Francisco J. R. Ruiz 
for useful discussions and David Blei for suggesting the name {\em one-vs-each} 
for the proposed method.  

\appendix 
\section{Proof of Proposition 3}

Here we re-state and prove {\bf Proposition 3}. 

{\bf Proposition 3.} {\em Assume that $K=2$ and we approximate the probabilities
$p(y=1)$ and $p(y=2)$ from $(2)$ 
 with the corresponding 
Bouchard's bounds given by $\frac{e^{f_1 - \alpha}}{(1 + e^{f_1 - \alpha}) 
(1 + e^{f_2 - \alpha})}$  and $\frac{e^{f_2 - \alpha}}{(1 + e^{f_1 - \alpha}) 
(1 + e^{f_2 - \alpha})}$. These bounds are used to approximate the maximum
likelihood solution for $(f_1,f_2)$ by maximizing the lower bound
\begin{equation}
\mathcal{F}(f_1,f_2,\alpha) = \log \frac{e^{N_1 (f_1 - \alpha)  + N_2 (f_2 -\alpha)}}{\left[(1 + e^{f_1 - \alpha}) 
(1 + e^{f_2 - \alpha})\right]^{N_1+N_2} }, 
\label{eq:Qf1f2alpha}
\end{equation}
obtained by replacing $p(y=1)$ and $p(y=2)$ 
in the exact log likelihood with Bouchard's bounds. 
Then, the global maximizer of $\mathcal{F}(f_1,f_2,\alpha)$ is such that 
\begin{equation}
\alpha = \frac{f_1 + f_2}{2}, \ \ 
f_k = 2 \log N_k  +  c, \ \ k=1,2.  
\label{eq:Bouchalphaf1f2_2}
\end{equation}
}
\begin{proof}
The lower bound is written as 
$$
N_1 (f_1 - \alpha)  + N_2 (f_2 - \alpha) 
- (N_1 + N_2) \left[ 
\log (1 + e^{f_1 -\alpha})  + \log (1 + e^{f_2 - \alpha})
 \right]. 
$$    
We will first maximize this quantity wrt $\alpha$. For that is suffices
to minimize the upper bound on the following log-sum-exp function
$$
\alpha + \log (1 + e^{f_1 -\alpha})  + \log (1 + e^{f_2 - \alpha}),
$$   
which is a convex function of $\alpha$. By taking the derivative wrt
$\alpha$ and setting to zero we obtain the stationary condition 
$$
\frac{e^{f_1 - \alpha}}{1 + e^{f_1 - \alpha}}  
  +  \frac{e^{f_2 - \alpha }}{1 + e^{f_2 - \alpha}} = 1.
$$ 
Clearly, the value of $\alpha$ that satisfies the condition 
is $\alpha = \frac{f_1 + f_2}{2}$. Now if we substitute 
this value back into the initial bound we have 
$$
N_1 \frac{f_1 - f_2}{2}  + N_2  \frac{f_2 - f_1}{2}
- (N_1 + N_2) \left[ 
\log (1 + e^{\frac{f_1 - f_2}{2}})  + \log (1 + e^{\frac{f_2 - f_1}{2}})
 \right] 
$$ 
which is concave wrt $f_1$ and $f_2$. Then, by taking derivatives wrt $f_1$ and $f_2$ 
we obtain the conditions  
$$
\frac{N_1 - N_2}{2}  =  \frac{(N_1 + N_2)}{2} \left[ 
\frac{ e^{\frac{f_1 - f_2}{2}}}{1 + e^{\frac{f_1 - f_2}{2}}} 
- 
\frac{ e^{\frac{f_2 - f_1}{2}}}{1 + e^{\frac{f_2 - f_1}{2}}} 
 \right]
$$
$$
\frac{N_2 - N_1}{2}  =  \frac{(N_1 + N_2)}{2} \left[ 
\frac{ e^{\frac{f_2 - f_1}{2}}}{1 + e^{\frac{f_2 - f_1}{2}}} 
- 
\frac{ e^{\frac{f_1 - f_2}{2}}}{1 + e^{\frac{f_1 - f_2}{2}}} 
 \right]
$$
Now we can observe that these conditions are satisfied by $f_1 = 2 \log N_1 + c$
and $f_2 = 2 \log N_2 + c$ which gives the global maximizer since 
$\mathcal{F}(f_1,f_2,\alpha)$ is concave.  
\end{proof}

\begin{small}
\bibliography{refs}
\bibliographystyle{apalike}
\end{small}

\end{document}